%% file: main.tex
\documentclass{article} % For LaTeX2e
\usepackage{iclr2027_conference,times}

\input{math_commands.tex}

\usepackage{hyperref}
\usepackage{url}
\usepackage{comment}
\usepackage{booktabs} 
\usepackage[table]{xcolor}
\usepackage{graphicx}
\usepackage{wrapfig}
\usepackage{subcaption}

\usepackage{amsmath,amsthm,amsfonts,thmtools}
\usepackage{amssymb}

\newtheorem{theorem}{Theorem}[section]

\theoremstyle{definition}

\newcommand{\fref}[1] {Figure~\ref{#1}}
\newcommand{\tref}[1] {Table~\ref{#1}}

\title{Looped Actor: Depth-Recurrent Reasoning Models for Reinforcement Learning}

\author{T. Konstantin Rusch\\
ELLIS Institute Tübingen \&\\
Max Planck Institute for Intelligent Systems \&\\
Tübingen AI Center \&\\
Liquid AI\\
\texttt{tkrusch@tue.ellis.eu} \\
\And
Tim Seyde \\
Liquid AI\\
\And
Jared Boyer \\
CSAIL, MIT\\
\And
Zach J. Patterson \\
Case Western Reserve University\\
\And
Daniela Rus \\
CSAIL, MIT\\
}

\iclrfinalcopy % Uncomment for camera-ready version, but NOT for submission.
\begin{document}

\maketitle

\begin{abstract}
Looped reasoning models repeatedly apply a shared set of parameters, enabling more computation without increasing the model size. These models also support input-dependent computation by dynamically deciding when to stop looping. Motivated by the recent success of looped transformers in language modeling and reasoning, we investigate whether dynamic looping can similarly benefit sequential decision-making. We provide a complexity-theoretic motivation for this approach by showing that there exist Markov decision processes in which a state-adaptive policy achieves the optimal return with asymptotically less expected computation than any optimal fixed-runtime policy. To learn compute-adaptive policies in practice, we introduce Looped Actor, a transformer-based policy that repeatedly refines a latent representation toward a fixed point using a shared computational block. This allows the model to allocate computation adaptively by varying the number of loops based on the current state. We evaluate Looped Actor on 22 tasks across six environments, ranging from combinatorial puzzles to robotic manipulation and spanning online and offline reinforcement learning (RL) with discrete and continuous actions. Looped Actor matches or exceeds the performance of an untied baseline with 16$\times$ more parameters, with the largest gains in environments where action selection requires substantial multistep planning. For the Boxoban environment, we find that the computation allocation is structured: the number of loops increases with the number of remaining pushes and future optimal pushes become increasingly predictable from the latent state over successive loops. Together, these results highlight actor looping as a simple and efficient way to equip RL agents with adaptive computation and improve their planning capabilities. Code is available at \href{https://github.com/camail-official/LoopedActor}{\textbf{https://github.com/camail-official/LoopedActor}}.
\end{abstract}

\section{Introduction}
Sequential decision-making involves choices of varying difficulty: some require little effort, while others call for substantial computation. In a game, for example, choosing a strategy may require extensive planning, whereas executing it may involve a sequence of straightforward moves. A policy should therefore be able to devote more computation to difficult decisions and less to routine ones. 

We theoretically motivate the computational advantage of adapting the computation of the policy to a given state using an established framework in complexity theory. More concretely, we construct a family of Markov Decision Processes (MDPs) for which we show there exist state-adaptive policies with a computational advantage over any fixed-runtime policies that grows without bound along the constructed MDP family, where both achieve optimal return. In order to learn policies with adaptive computation in practice, we propose to use looped reasoning models in the context of Reinforcement Learning (RL). Looped reasoning models naturally support such adaptation by repeatedly applying the same parameterized block and halting when reasoning is complete, enabling adaptive computation without increasing model size. 

Beyond their capacity for adaptive computation, looped models have attracted growing interest as an alternative to explicit Chain-of-Thought (CoT) reasoning \citep{dehghani2018universal,bansal2022end,saunshi2025reasoning,hao2024training,wei2022chain}. Early work on looped transformers focused on algorithmic tasks that decompose into subroutines and benefit from a recurrent inductive bias \citep{dehghani2018universal,giannou2023looped,kohli2026loop,yang2311looped,fan2025looped}. More recently, looping has shown promise for reasoning in Large Language Models (LLMs) \citep{geiping2026scaling,zhu2025scaling,kapl2026growing}. In parallel, small looped models have achieved strong performance on challenging reasoning tasks with fewer than $0.01\%$ of the parameters of frontier models \citep{wang2025hierarchical,jolicoeur2025less,movahedi2026fixed,huang2026equilibrium}. These results raise a natural question: can looped models similarly provide useful reasoning capabilities together with adaptive computation for sequential decision-making?

We introduce Looped Actor, a deep reinforcement learning (RL) policy that adapts the Fixed-Point Reasoning Model (FPRM) \citep{movahedi2026fixed} to action selection. The actor repeatedly refines a latent representation using a shared transformer block and halts when its predicted action distribution stabilizes. This allows the policy to reason and adapt its computation to the current state without increasing the model’s parameter count.

Our experiments across 22 tasks in six environments show that Looped Actor combines strong performance with computational and parameter efficiency. On Boxoban, it achieves roughly twice the success rate of an untied transformer with 16 times as many parameters, while matching this baseline’s average performance across 20 offline goal-conditioned control tasks. Beyond these performance gains, we investigate how repeated computation supports planning. Probes of the actor’s latent states show that future optimal box pushes become increasingly predictable over successive loops, providing evidence that the model develops representations of multistep solutions. Its allocation of computation also reflects its ability to solve a puzzle: in successful episodes, the number of loops increases with the difficulty of a state measured via the number of remaining pushes. These results suggest that Looped Actor provides a simple mechanism for combining parameter-efficient reasoning and state-dependent computation in reinforcement learning.

\section{Related Work}
\subsection{Looped Reasoning Models}
Looped architectures can increase computation without increasing the model size by repeatedly applying the same block of parameters. The first looped transformer architecture, the Universal Transformer \citep{dehghani2018universal}, was specifically designed to leverage this recurrent inductive bias to effectively solve tasks that can be decomposed into smaller computational subroutines. This approach has since been extended to various algorithmic and arithmetic tasks \citep{giannou2023looped,kohli2026loop,yang2311looped,fan2025looped}. More recently, looped transformers have attracted growing interest for efficient reasoning in LLMs \citep{geiping2026scaling,zhu2025scaling,saunshi2025reasoning,kapl2026growing}, with looping enabling a new axis of test-time scaling.

A complementary line of work explores small looped transformers for challenging reasoning tasks. For example, Hierarchical Reasoning Models (HRMs) \citep{wang2025hierarchical} and Tiny Recursive Models (TRMs) \citep{jolicoeur2025less} have been shown to outperform frontier reasoning models on benchmarks such as ARC-AGI \citep{chollet2024arc}, using efficient hierarchical looped architectures with fewer than 0.01\% of the parameters. These results have motivated further work on efficient reasoning with looped transformers. One framework within this line of work is attractor-based reasoning, in which a looped transformer converges to an equilibrium state that represents the solution. The Fixed-Point Reasoning Model (FPRM) \citep{movahedi2026fixed} implements this approach through fixed-point dynamics. Equilibrium Reasoners (EqR) \citep{huang2026equilibrium} and Attractor Models \citep{fein2026solve} learn input-dependent attractors and use the corresponding equilibrium states as model outputs.

\subsection{Iterative Computation in RL Policies}
Additional computation per decision can support planning without explicit search such as Monte Carlo Tree Search \citep{schrittwieser2020mastering}. Early approaches embed algorithmic structure into the network, including differentiable value iteration \citep{tamar2016value} and latent tree expansion with differentiable backups \citep{farquhar2017treeqn, guez2018learning}. Other model-based methods leverage learned abstract transition dynamics to perform lookahead rollouts \citep{oh2017value, silver2017predictron} or gradient-based trajectory optimization in latent space \citep{srinivas2018universal}. Thinker instead learns to plan by interacting with a learned world model before acting \citep{chung2023thinker}. In contrast, Looped Actor does not prescribe a particular planning algorithm or computation graph, rather it repeatedly applies a shared computational block and learns how this recurrent computation supports action selection.

Depth recurrence replaces such templates with repeated application of a shared module. Depth-recurrent policies provide a closely related approach. Deep Repeated ConvLSTM (DRC) agents repeatedly apply a stack of ConvLSTM modules several times per observation while carrying state across environment steps \citep{guez2019investigation}. Subsequent work has shown that these agents exhibit signatures of emergent planning \citep{bush2025interpreting} and improve with additional test-time iterations \citep{taufeeque2024planning}. In Sokoban, recurrence across environment steps also reduces the required within-step depth \citep{anokhin2026temporal}. \citep{ghugare2026role} prove that for some tasks, policies with more computation succeed and generalize to longer horizons where compute-limited policies fail. They support this empirically with a gated recurrent block iterated a fixed number of times per step, and identify state-dependent compute allocation as an open problem. The Looped Actor addresses this by allowing the number of recurrent iterations to vary with the current state.

Adaptive computation time \citep{graves2016adaptive,banino2021pondernet} has been studied extensively in recurrent and looped models \citep{dehghani2018universal,zhu2025scaling} but has seen limited use in RL. Continuous Thought Machines decouple internal ticks from the input and support certainty-based halting, with applications that include RL \citep{darlow2026continuous}, while Looped World Models apply an adaptive-depth looped transformer to transition prediction rather than action selection \citep{lu2026looped}. To our knowledge, the Looped Actor is the first RL policy to combine a looped-transformer backbone with state-dependent iteration counts, extending the attractor dynamics of looped reasoning models to actor learning.

\section{A Theoretical Motivation for Adaptive Computation}
\label{sec:adaptive_computation_theory}

We ask whether adapting computation to the state can reduce the expected computational cost of action selection while preserving optimal return. Building on the formal framework of \citet{ghugare2026role} and using classical tools from complexity theory \citep{sipser2006introduction}, we construct a family of MDPs for which a state-adaptive implementation can achieve the same optimal return with asymptotically lower expected computation than every optimal fixed-runtime implementation under a shared description-size bound. 

We represent a deterministic policy $\pi$ by a single-tape Turing machine that receives a binary encoding $s$ of an environment state and outputs a binary action. All implementations have descriptions shorter than $K_{\max}$ bits, where $K_{\max}$ is independent of input length.
Let $C_\pi(s)$ denote the number of machine steps used by the chosen implementation to select an action on input $s$. We say that an implementation has \emph{fixed runtime} at input length $m$ if,
\[
 C_\pi(s)=B_\pi(m)
 \qquad\text{for every }s\in\{0,1\}^{m},
\]
where $\{0,1\}^{m}$ is the set of all binary strings of length $m$, and $B_\pi(m)$ is the common number of computation steps spent on each such input. An adaptive implementation may use different amounts of computation on inputs of the same length. 
We consider a family of one-step MDPs $(\mathcal M_n)_{n\ge2}$, indexed by $n$. Let $\mu_n$ denote the initial-state distribution of $\mathcal M_n$, and let $J_n(\pi)$ denote the policy's expected return. Because each episode contains only one action, this return is its expected reward. The expected computational cost is defined by
\[
 \overline C_n(\pi)
 :=\mathbb{E}_{s\sim\mu_n}\!\left[C_\pi(s)\right].
\]

\begin{theorem}
\label{thm:adaptive_computation}
There exist constants $K_0\in\mathbb{N}$ and $c>0$, a single adaptive implementation $\pi_{\mathrm{ad}}$, and a family of finite one-step MDPs $(\mathcal M_n)_{n\ge2}$ with binary actions and rewards in $\{0,1\}$, such that the following holds for every fixed $K_{\max}\ge K_0$ and all sufficiently large $n$. The initial states comprise all $(n+1)$-bit strings, each with positive probability. The adaptive implementation has description length below $K_{\max}$ and satisfies
\begin{equation*}
 J_n(\pi_{\mathrm{ad}})=1,
 \qquad
 \overline C_n(\pi_{\mathrm{ad}})\le cn+1.
 \label{eq:adaptive_expected_cost}
\end{equation*}
Every optimal implementation $\pi$ within the same description-size bound that has fixed runtime at length $n+1$ instead satisfies $\overline C_n(\pi)>n^2$. An implementation within this bound that is optimal for every $n$ and has fixed runtime at every input length also exists. Let $\mathcal F_n$ denote the nonempty set of implementations
with description length below $K_{\max}$ that achieve
$J_n(\pi)=1$ and have fixed runtime at input length $n+1$.
Then,
\begin{equation*}
 \inf_{\pi\in\mathcal F_n}
 \frac{\overline C_n(\pi)}{\overline C_n(\pi_{\mathrm{ad}})}
 \ge \frac{n^2}{cn+1},
 \label{eq:adaptive_compute_separation}
\end{equation*}
and hence the computational advantage of adaptive computation grows without bound as \(n\to\infty\).
\end{theorem}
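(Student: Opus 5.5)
We plan to build the MDP family around a language that needs quadratic time on a single-tape Turing machine. The classical example is palindromes, $\mathrm{PAL}=\{ww^R\}$ (see \citet{sipser2006introduction}), and we will gate it behind a cheap flag bit. A state of $\mathcal M_n$ is an $(n+1)$-bit string $s=b\,x$ with flag $b\in\{0,1\}$ and payload $x\in\{0,1\}^n$. If $b=0$, the rewarded action is $1$, so this case is trivial. If $b=1$, action $1$ earns reward $1$ exactly when $x$ is a palindrome, and action $0$ earns reward $1$ otherwise. For $\mu_n$ we put mass $1-p_n$ on the flag-$0$ states and mass $p_n$ on the flag-$1$ states, spread uniformly within each group, so every state has positive probability. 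We choose $p_n=n^{-1}$ or smaller.

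The adaptive implementation $\pi_{\mathrm{ad}}$ is a single fixed machine, which gives a constant description length $K_0$. It reads the first bit. On $b=0$ it outputs $1$ and halts after $O(1)$ steps. On $b=1$ it runs the standard zig-zag palindrome checker, which takes at most $c'n^2$ steps. Its expected cost is therefore at most $O(1)+p_n c'n^2$, and this is $\le cn+1$ for $p_n=1/n$ and a suitable constant $c$. The machine is correct on every state, so $J_n(\pi_{\mathrm{ad}})=1$. The same checker, padded with an idle loop so that every input of length $m$ runs for exactly $T(m)=c''m^2$ steps, is a fixed-runtime implementation that is optimal for every $n$. This shows $\mathcal F_n\neq\emptyset$.

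For the lower bound, take any $\pi\in\mathcal F_n$. Because every state has positive probability and rewards lie in $\{0,1\}$, the condition $J_n(\pi)=1$ forces $\pi$ to be correct on every flag-$1$ input. So $\pi$, run on inputs $1x$, decides whether $x$ is a palindrome. The crossing-sequence argument then gives some input of length $n+1$ on which $\pi$ takes at least $\alpha n^2/\log|Q|$ steps. Here $|Q|\le 2^{K_{\max}}$ bounds the number of states, because the description is shorter than $K_{\max}$ bits. We would run the argument on inputs $1\,u\,0^{n/3}u^R$: distinct $u$ must have distinct crossing sequences at each boundary in the middle block, and this forces $\sum_{\text{boundaries}}|\text{crossing seq}|\ge \Omega(n^2/K_{\max})$. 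Since $\pi$ has fixed runtime at length $n+1$, every input costs this maximum, so $\overline C_n(\pi)=B_\pi(n+1)\ge \alpha n^2/K_{\max}$, and this is $>n^2$ once the constants are arranged.

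The step we expect to be the main obstacle is this last one: the constants have to line up. The crossing-sequence bound carries a $1/K_{\max}$ factor, so a strict $>n^2$ with leading constant $1$ cannot hold for arbitrary $K_{\max}$ as stated. We would fix this by rescaling the instance. Let the flag-$1$ payload encode a palindrome instance of length $N=\lceil \beta n\rceil$ embedded inside the $n$ bits, or equivalently count each machine step with a fixed scale. Either way, for $n$ large enough relative to $K_{\max}$, the lower bound $\alpha N^2/K_{\max}$ exceeds $n^2$. The adaptive cost still stays linear, because $p_n$ can be lowered to $\Theta(n^{-1})$ with the constant absorbed into $c$; the guarantee only has to hold for sufficiently large $n$ at each fixed $K_{\max}$, and we would tune $p_n$ accordingly. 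If this rescaling does not work out, the fallback is to take $p_n=n^{-1}$ and a harder language, for instance a copy-comparison, where the lower bound has an extra $\log$ or polynomial slack. With the bounds $\overline C_n(\pi)>n^2$ for every $\pi\in\mathcal F_n$ and $\overline C_n(\pi_{\mathrm{ad}})\le cn+1$ in hand, the ratio $\inf_{\pi\in\mathcal F_n}\overline C_n(\pi)/\overline C_n(\pi_{\mathrm{ad}})\ge n^2/(cn+1)\to\infty$ follows directly.
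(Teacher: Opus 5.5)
There is a genuine gap in the lower bound, and it is not a matter of arranging constants. For your palindrome-gated family, the claim $\overline C_n(\pi)>n^2$ for all $\pi\in\mathcal F_n$ is \emph{false}. The fixed-runtime machine you build to show $\mathcal F_n\neq\emptyset$ is itself a counterexample. Make the zig-zag oblivious: always complete all round trips, record any mismatch in the finite control, run it on $x$ whatever the flag is, and then select the output. This machine has constant description length, is optimal, and uses about $2\sum_{k\le n/2}(n-2k)\approx n^2/2+O(n)<n^2$ steps on every input of length $n+1$. Machines with larger descriptions that compare $r$-bit blocks are faster still. So the $1/\log|Q|$ loss in your crossing-sequence bound reflects real speedups, not slack in the analysis. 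More generally, crossing-sequence (communication) arguments on $n$ input bits cannot certify more than roughly $\sum_i\min(i,n-i)\approx n^2/4$ steps, because about $\min(i,n-i)$ bits across boundary $i$ always suffice.

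Your proposed repairs do not get around this. A palindrome instance embedded in $n$ bits has length $N\le n$, so $\alpha N^2/K_{\max}\le\alpha n^2/K_{\max}$ only gets smaller. Rescaling step counts changes the cost $C_\pi$ that the statement fixes. Copy-comparison $\{ww\}$ is also $\Theta(n^2)$ on a single tape, so it gives no extra slack.

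The missing idea is diagonalization with an explicit step cutoff. This is how the paper gets the exact threshold $n^2$ with leading constant one and no dependence on $K_{\max}$. On a flag-$1$ state $1x$, the rewarded action is $h_n(x)$, defined as follows. If $x=1^\ell0p0^{n-2\ell-1}$ with $p$ a valid machine description, run $p$ on the whole state $1x$ for at most $n^2$ of its own steps; output the opposite of its answer if it halts, and output $0$ otherwise. For $n\ge 2K_{\max}+1$, any $\pi\in\mathcal F_n$ fits its own description into such a string $x_\pi$. If $B_\pi(n+1)\le n^2$, then $\pi$ halts on $1x_\pi$ within the cutoff, $h_n$ flips its answer, and $\pi$ earns reward $0$ on a positive-probability state, contradicting $J_n(\pi)=1$.

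Evaluating $h_n$ costs some polynomial $q(n)\ge n^2$. The hard branch must therefore get probability $1/q(n)$ rather than $1/n$, so that $\overline C_n(\pi_{\mathrm{ad}})\le cn+1$. With that change your adaptive-side argument carries over. The fixed-runtime member of $\mathcal F_n$ is then obtained by evaluating $h_n$ on an input-independent schedule.
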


The construction captures the simple principle that some states are easy and occur frequently while others are rare but require substantially more computation. To achieve optimal return, a fixed-runtime policy must allocate enough computation to solve the hard states and incur that cost on every input regardless of difficulty. An adaptive policy can stop earlier on easy states, achieving the same optimal return with a computational advantage that grows without bound along the constructed MDP family. This motivates the state-dependent allocation of computation implemented by Looped Actor through adaptive halting. The full proof is provided in Appendix~\ref{app:adaptive_computation}.

\section{Looped Actor}
We instantiate our Looped Actor with an adaptation of a recently proposed looped transformer model for reasoning, called Fixed-Point Reasoning Model (FPRM) \citep{movahedi2026fixed}.
FPRM performs reasoning through iterative refinement of a latent representation. Given an embedded input \(\mathbf{x}\), a shared, input-conditioned Transformer block \(f_\theta\) repeatedly updates a latent state toward a fixed point, i.e., \(\mathbf{z}^{\star}=f_\theta(\mathbf{z}^{\star};\mathbf{x})\). Reusing the same parameters across iterations allows the model to increase its effective depth without increasing its parameter count. The model determines the number of loops itself for each input \(\mathbf{x}\) by a stability criterion, which we describe below.

To support stable computation over many loops, FPRM combines pre-normalization with residual scaling at both the sublayer and looping levels. Within a stack of \(K\) Transformer blocks, each attention or feed-forward sublayer applies

$$
\mathbf{h}^{k}
=
\alpha_1\mathbf{h}^{k-1}
+
\beta_1 g_{\theta^k}^{k}
\!\left(\operatorname{Norm}(\mathbf{h}^{k-1})\right),
\qquad k=1,\ldots,2K.
$$

Here, \(g_{\theta^k}^{k}\) denotes the transformation performed by sublayer \(k\), corresponding to either multi-head self-attention or a feed-forward network, with parameters \(\theta^k\) shared across recurrent iterations. The representation \(\mathbf{h}^{k-1}\) is the sublayer input. The input embedding is re-injected between iterations through the weighted combination \(\alpha_2\mathbf{h}^{2K}+\beta_2\mathbf{x}\). The learnable coefficients \(\alpha_1\) and \(\alpha_2\) control retention of the residual stream and recurrent state, respectively. The remaining coefficients are coupled as \(\beta_2=1-\alpha_2\alpha_1^{2K}\) and \(\beta_1=\beta_2(1-\alpha_1)/(1-\alpha_1^{2K})\). Under the stated boundedness assumptions and \(0\leq\alpha_1,\alpha_2<1\), this parameterization keeps the iterates bounded while retaining the signal-propagation benefits of pre-normalization. A depth-wise convolution at the beginning of each loop additionally supports local feature mixing for structured inputs. Together, these operations define the looped Transformer block \(f_\theta\).

While the original FPRM formulation defines halting via convergence of the relative residuals of the latent representations $\mathbf{z}^{\ell}$, we define halting via stabilization of the output distributions. More concretely, for a probabilistic policy predicting an output distribution $p_\ell = p_\ell(\cdot \mid s)$ conditioned on a state $s$ at loop $\ell$, e.g., Gaussian distributions with learned means or categorical distributions, we halt at the first loop satisfying
\begin{equation}
D_{\mathrm{KL}}(p_{\ell-1} | p_\ell) < \tau,
\label{eq:halting}
\end{equation}
where $D_{\mathrm{KL}}$ denotes the Kullback–Leibler (KL) divergence and $\tau$ is a predefined threshold. A similar approach has been taken in \citet{geiping2026scaling}, where the KL divergence between consecutive language token logit distributions is used as the halting mechanism. 

Finally, we train Looped Actor via backpropagation through time without truncation or deep supervision, contrary to the original FPRM model; we detail the supervised readouts in Section~\ref{sec:rl_pipeline}.

\subsection{Reinforcement Learning with Looped Actor}
\label{sec:rl_pipeline}

\paragraph{Problem setting.}
We formulate the RL problem as a goal-conditioned MDP described by the tuple $\{\mathcal{S}, \mathcal{A}, \mathcal{G}, \mathcal{T}, \mathcal{R}, \gamma\}$, where $\mathcal{S}$, $\mathcal{A}$, and $\mathcal{G}$ denote the state, action, and goal space, $\mathcal{T}$ the transition distribution, $\mathcal{R}$ the reward function, and $\gamma \in [0, 1)$ the discount factor. Let $s_t$ and $a_t$ denote the state and action at time $t$, where actions are sampled from the policy $\pi(a_t \mid s_t, g)$. Our objective is to learn a policy that maximizes the expected return $\mathbb{E}[\sum_t \gamma^t \mathcal{R}(s_t, a_t, g)]$, either through online interaction or from a fixed offline dataset $\mathcal{D}$. When the goal is implicit in the state, we omit $g$. Here, we use the output distribution at the halting loop as the policy, $\pi_\theta(a_t \mid s_t, g) = p_{\ell_\theta}(a_t)$, with $\ell_\theta(s_t, g)$ given by Eq.~\ref{eq:halting}. Halting introduces no additional parameters, and we train $\pi_\theta$ with standard RL objectives, treating halting as non-differentiable: gradients flow through the executed loops, and halted examples are frozen within a batch. %The same halting rule is used during acting and training, and $\ell_{\max}$ can be increased to scale test time compute.

\paragraph{Online RL.}
In the online setting, we train the Looped Actor with Proximal Policy Optimization (PPO)~\citep{schulman2017ppo}. The policy and value function share the looped block $f_\theta$ and are decoded from the latent at the halting loop, such that the critic is looped as well. Given rollouts collected with the previous policy $\pi_{\theta_\text{old}}$, we minimize
\begin{equation}
  \mathcal{L}(\theta) = \mathbb{E}_t\big[
  -\min\big(\rho_t \hat{A}_t, \mathrm{clip}(\rho_t, 1 - \epsilon, 1 + \epsilon)\hat{A}_t\big)
  + c_V \big(V_\theta(s_t) - \hat{V}_t\big)^2
  - c_H \mathcal{H}\big(\pi_\theta(\cdot \mid s_t)\big)\big],
  \label{eq:ppo}
\end{equation}
where $\rho_t = \pi_\theta(a_t \mid s_t) / \pi_{\theta_\text{old}}(a_t \mid s_t)$ denotes the importance ratio with clipping range $\epsilon$, using the action probability stored at collection time, $\hat{V}_t$ the $\lambda$-return~\citep{schulman2016gae}, $\hat{A}_t = r_t + \gamma \hat{V}_{t+1} - V_{\theta_\text{old}}(s_t)$ the batch-normalized advantage formulation from Brax~\citep{freeman2021brax}, $\mathcal{H}$ the policy entropy, and $c_V$ and $c_H$ weight the value and entropy terms. Note that $\pi_\theta$ is recomputed during each update, so its halting loop may differ from the one employed by the behavior policy.

\paragraph{Offline goal-conditioned RL.}
In the offline setting, we build on goal-conditioned implicit Q-learning (GCIQL)~\citep{kostrikov2022iql, park2025ogbench}, which samples value goals from the current state, future states of the same trajectory, or random states from $\mathcal{D}$, and actor goals from future states of the same trajectory, and assigns a reward of $-1$ at every step until the goal is reached. The value function $V_\xi$ and twin critics $Q_{\psi_1}, Q_{\psi_2}$ are MLPs trained with the expectile and temporal-difference losses of IQL, using a target critic for stability. We consider only the actor to be looped here, so that differences between models reflect only the actor architecture, and train with the DDPG+BC objective~\citep{fujimoto2021minimalist,park2024value}
\begin{equation}
  \mathcal{L}_\pi(\theta) = \mathbb{E}_{\mathcal{D}}\big[
  -\lambda_Q^{-1} \min\big(Q_{\psi_1}, Q_{\psi_2}\big)\big(s_t, \mu_\theta(s_t, g), g\big)
  - c_{\mathrm{BC}} \log \pi_\theta(a_t \mid s_t, g)\big],
  \label{eq:ddpgbc}
\end{equation}
where $\pi_\theta$ is a Gaussian with fixed unit standard deviation, following OGBench, so that the BC term equals $\tfrac12\lVert\mu_\theta(s_t,g)-a_t\rVert^2$ up to a constant, $\mu_\theta$ denotes its mean, clipped to the action bounds, $\lambda_Q$ normalizes the Q-values by their mean magnitude over the batch, and $c_{\mathrm{BC}}$ weights the behavior cloning term. The gradient of the Q-term propagates through all loops executed by the actor.

\section{Experiments}
The goal of the presented experiments is to isolate the benefits of looping and halting in RL policies and identify the mechanisms underlying these gains. We therefore compare Looped Actor with carefully chosen baselines, prioritizing controlled comparisons over state-of-the-art performance. Hyperparameters are provided in Appendix Section \ref{app:hps}.

\subsection{Environments}
We consider two environments in which successfully completing an episode requires multistep planning for most action decisions. We also include four environments from a recently proposed offline goal-conditioned benchmark. Together, these six environments span online and offline RL and include both continuous and discrete action spaces for a total of $22$ downstream tasks. Exemplary visualizations for all six environments can be found in Appendix \ref{app:env_images}.
\paragraph{Boxoban} 
The first environment is based on Sokoban \citep{sokoban,sokoban2}. Each puzzle takes place on a two-dimensional board, where an agent must push boxes to goal locations. The difficulty arises from obstacles and walls, combined with the restriction that boxes can be pushed but not pulled. As a result, the agent can enter dead-end states from which recovery is impossible. Solving a Sokoban puzzle therefore requires planning multiple steps ahead to avoid such states. In this paper, we use the Boxoban version introduced by \citet{guez2019investigation}, which fixes the board size at $10\times10$. We consider the unfiltered set, comprising 900k puzzles for training, 100k for validation, and 1k for testing. 

\paragraph{Rush hour}
We consider the puzzle game of Rush Hour which is played on a $6\times6$ board occupied by cars and trucks of length $2$ and $3$, respectively. The objective is to move the vehicles to clear a path for a designated car to reach the exit. This represents a very challenging planning game. In fact, its generalization to arbitrarily large boards is proven to be PSPACE-complete \citep{flake2002rush}. In this paper, we develop an RL environment based on the rush repository\footnote{\url{https://github.com/fogleman/rush}}. Our training, validation, and test sets contain 2M, 10K, and 10K puzzles, respectively. Each game has an optimal solution length of at most $15$ moves, defined as the minimum number of moves required to solve it. 

\paragraph{OGBench}
We extend our study to offline goal-conditioned RL with continuous actions using four tabletop manipulation environments from OGBench~\citep{park2025ogbench}. The Cube double and Cube triple environments require rearranging cubes into a target configuration. The Puzzle 3$\times$3 environment requires pressing buttons on a grid to reach a target color pattern, where each press toggles the pressed button and its neighbors, as in the game Lights Out~\citep{anderson1998turning}. The Scene environment requires bringing a cube, a drawer, a window, and two buttons into a target configuration, where the buttons (un-)lock the drawer and window. We train on the state-based play datasets, collected by scripted policies performing randomly chosen subtasks, and evaluate on the five goal classes provided per environment, for a total of 20 tasks. Solving these tasks requires both high-level planning of subtasks and fine-grained continuous control of the robot arm.

\subsection{Model and Baseline Specifications}
We limit the number of loops for our Looped Actor to a maximum of $16$ iterations during training and compare it with two baselines. The \textbf{Iso-Parameters} baseline uses the same transformer block as Looped Actor but applies it only once, preserving the parameter count. The \textbf{Iso-FLOPs} baseline stacks $16$ copies of this block without sharing parameters across blocks. It therefore matches the training compute of Looped Actor at the maximum loop count, but has $16$ times as many parameters as either Looped Actor or the Iso-Parameters baseline. We choose the hidden dimension of the blocks to be $128$, resulting in a total of $0.5$M parameters for Looped Actor and Iso-Parameters, and around $8$M parameters for Iso-FLOPs. Finally, we include \textbf{DRC} as a baseline on Boxoban and Rush Hour (not on OGBench as it requires a grid-structured input to its convolutional layers) because it is the closest to our approach. Since Looped Actor aims to perform multistep planning from the current state without memory, we further adapt DRC to the same stateless setting. We use the depth-recurrent variant DRC$(3,3)$, which performed best across all environments considered in \citet{guez2019investigation}. With approximately 1M parameters, DRC$(3,3)$ is roughly twice the size of both Looped Actor and the Iso-Parameters baseline.

For our transformer-based models, we adopt the tokenization scheme proposed by \citet{dong2026tql}. Specifically, each state dimension is treated as a separate token and mapped to a latent embedding through a shared linear encoder. For goal-conditioned OGBench experiments, we append the goal tokens to the state-token sequence. For the Puzzle 3$\times$3 environment, we instead concatenate corresponding state and goal dimensions and apply the linear encoder to each resulting pair.

\subsection{Results}
We report the success rate as a function of environment steps for Boxoban in \fref{fig:sokoban_learning_curves} and Rush Hour in \fref{fig:rushhour_success_vs_env_steps}, showing the mean and standard deviation across three random seeds. Looped Actor substantially outperforms the Iso-FLOPs and DRC baselines despite having $2$ to $16$ times less parameters. The Iso-Parameters baseline shows no meaningful learning on Boxoban and, although it achieves a nonzero success rate on Rush Hour, performs substantially worse than both Looped Actor and Iso-FLOPs. Together, these results demonstrate that looped reasoning enables strong performance in environments requiring complex, multistep planning.

\begin{figure}[ht]
    \centering

    \begin{minipage}[t]{0.47\textwidth}
        \centering
        \begin{minipage}[c][5cm][c]{\linewidth}
            \centering
            \includegraphics[
                width=\linewidth,
                height=5cm,
                keepaspectratio
            ]{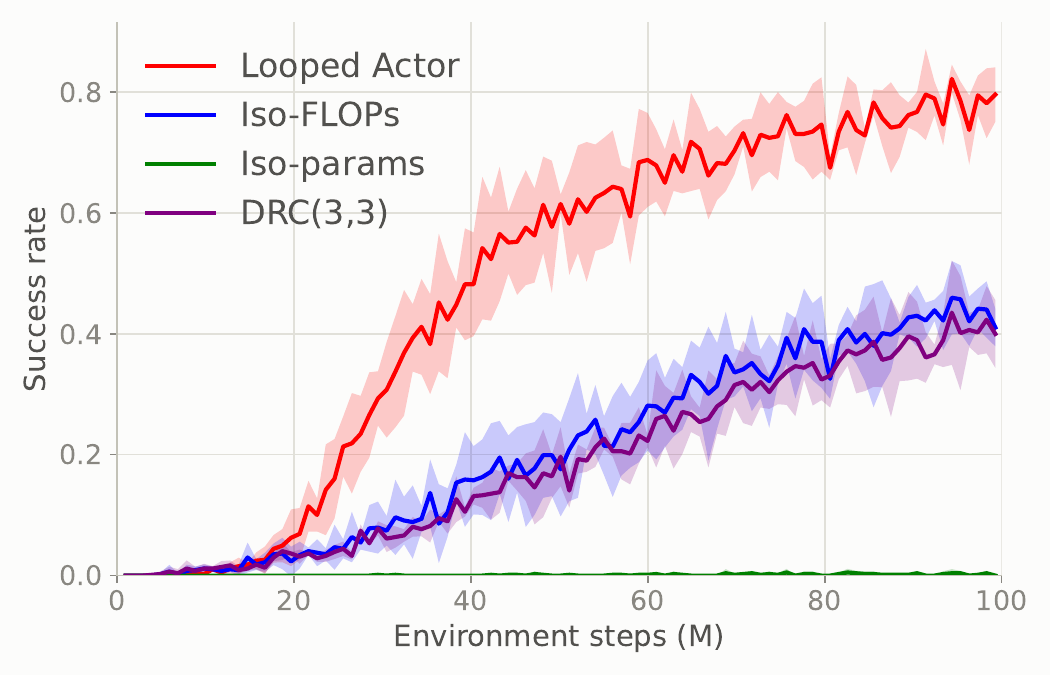}
        \end{minipage}
        \caption{Boxoban success rate on unfiltered games during training for Looped Actor, Iso-FLOPs, Iso-Parameters, and DRC$(3,3)$ models.}
        \label{fig:sokoban_learning_curves}
    \end{minipage}
    \hfill
    \begin{minipage}[t]{0.47\textwidth}
        \centering
        \begin{minipage}[c][5cm][c]{\linewidth}
            \centering
            \includegraphics[
                width=\linewidth,
                height=5cm,
                keepaspectratio
            ]{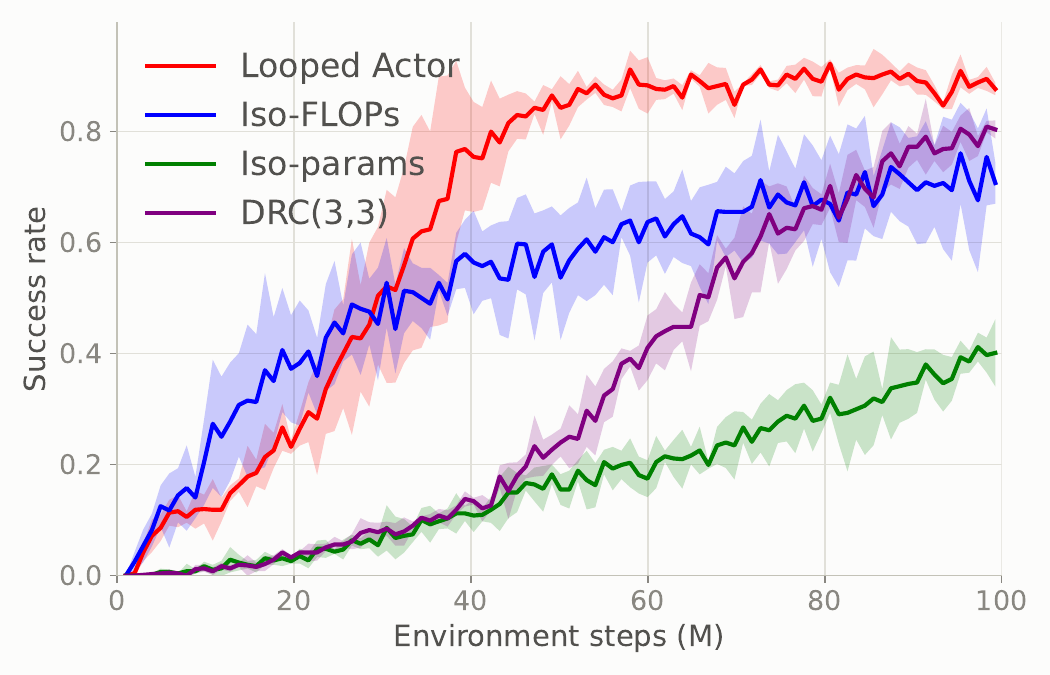}
        \end{minipage}
        \caption{Rush Hour success rate during training for Looped Actor, Iso-FLOPs, Iso-Parameters, and DRC$(3,3)$ models.}
        \label{fig:rushhour_success_vs_env_steps}
    \end{minipage}

\end{figure}

To assess whether Looped Actor performs well in environments that require less multistep planning than Boxoban or Rush Hour, we evaluate all Looped Actor, Iso-Parameters, and Iso-FLOPs on the four OGBench environments considered in this paper. \tref{tab:main} reports the mean and standard deviation across five random seeds. Looped Actor ties or slightly outperforms Iso-FLOPs in all four environments. Iso-Parameters performs substantially worse, achieving zero success on at least one downstream task in three of the four environments. We provide the success rates during the full training runs in \fref{fig:ogbench_learning_curves_nods_means} of the Appendix. 

Together, these results show that Looped Actor matches the performance of Iso-FLOPs with substantially fewer parameters and outperforms it by a large margin in environments where most action decisions require multistep planning. Iso-Parameters consistently underperforms both models across all environments.

\begin{table}[h]
\centering
\caption{Success rate of the Looped Actor and the Iso-parameters and Iso-FLOPs baselines on OGBench tasks (\%, mean $\pm$ std over 5 seeds, final checkpoint at 1M gradient steps).}
\label{tab:main}
\begin{tabular}{lccc}
\toprule
Environment / task & Iso-params & Iso-FLOPs ($16\times \#$params) & Looped Actor \\
\midrule
\rowcolor{gray!15} \textbf{Cube double} & 15.2 $\pm$ 7.6 & 66.9 $\pm$ 4.2 & 67.7 $\pm$ 4.6 \\
\quad single pnp & 50.8 $\pm$ 22.0 & 95.6 $\pm$ 3.0 & 97.2 $\pm$ 1.8 \\
\quad double pnp 1 & 7.2 $\pm$ 7.6 & 88.8 $\pm$ 4.6 & 89.6 $\pm$ 13.1 \\
\quad double pnp 2 & 3.2 $\pm$ 4.6 & 84.8 $\pm$ 8.3 & 93.2 $\pm$ 5.2 \\
\quad swap & 0.0 $\pm$ 0.0 & 12.8 $\pm$ 3.0 & 6.4 $\pm$ 2.6 \\
\quad stack & 14.8 $\pm$ 14.1 & 52.4 $\pm$ 8.6 & 52.0 $\pm$ 7.7 \\
\midrule
\rowcolor{gray!15} \textbf{Cube triple} & 2.7 $\pm$ 5.2 & 47.4 $\pm$ 7.9 & 50.1 $\pm$ 6.1 \\
\quad single pnp & 8.8 $\pm$ 15.3 & 93.2 $\pm$ 10.0 & 96.8 $\pm$ 1.8 \\
\quad triple pnp & 2.8 $\pm$ 6.3 & 79.6 $\pm$ 11.1 & 80.4 $\pm$ 20.0 \\
\quad pnp from stack & 2.0 $\pm$ 4.5 & 43.6 $\pm$ 18.8 & 52.4 $\pm$ 18.0 \\
\quad cycle & 0.0 $\pm$ 0.0 & 4.0 $\pm$ 2.4 & 8.0 $\pm$ 2.0 \\
\quad stack & 0.0 $\pm$ 0.0 & 16.8 $\pm$ 10.4 & 12.8 $\pm$ 7.2 \\
\midrule
\rowcolor{gray!15} \textbf{Puzzle 3$\times$3} & 89.9 $\pm$ 7.5 & 91.4 $\pm$ 12.5 & 91.4 $\pm$ 7.4 \\
\quad task 1 & 99.6 $\pm$ 0.9 & 94.4 $\pm$ 10.4 & 96.8 $\pm$ 5.2 \\
\quad task 2 & 94.0 $\pm$ 5.1 & 90.8 $\pm$ 14.1 & 94.0 $\pm$ 6.9 \\
\quad task 3 & 87.2 $\pm$ 7.8 & 91.2 $\pm$ 12.4 & 86.0 $\pm$ 11.7 \\
\quad task 4 & 83.2 $\pm$ 14.2 & 88.4 $\pm$ 17.1 & 88.0 $\pm$ 6.8 \\
\quad task 5 & 85.6 $\pm$ 16.0 & 92.4 $\pm$ 10.4 & 92.0 $\pm$ 9.6 \\
\midrule
\rowcolor{gray!15} \textbf{Scene play} & 24.2 $\pm$ 4.9 & 68.2 $\pm$ 4.6 & 69.6 $\pm$ 5.5 \\
\quad open & 62.4 $\pm$ 18.7 & 99.2 $\pm$ 1.1 & 99.6 $\pm$ 0.9 \\
\quad unlock and lock & 4.8 $\pm$ 2.7 & 96.0 $\pm$ 4.0 & 98.0 $\pm$ 2.4 \\
\quad rearrange medium & 50.0 $\pm$ 7.9 & 90.4 $\pm$ 5.2 & 92.8 $\pm$ 6.6 \\
\quad put in drawer & 3.6 $\pm$ 1.7 & 35.2 $\pm$ 5.0 & 43.6 $\pm$ 21.5 \\
\quad rearrange hard & 0.0 $\pm$ 0.0 & 20.0 $\pm$ 14.8 & 14.0 $\pm$ 13.1 \\
\midrule
\rowcolor{gray!15} \textbf{Overall average} & 33.0 $\pm$ 3.4& 68.5 $\pm$ 4.7 & 69.7 $\pm$ 2.4 \\
\bottomrule
\end{tabular}
\vspace{-1em}
\end{table}

\paragraph{Multistep planning.}
\begin{wrapfigure}{r}{0.45\textwidth}
    \centering
    \includegraphics[width=\linewidth]{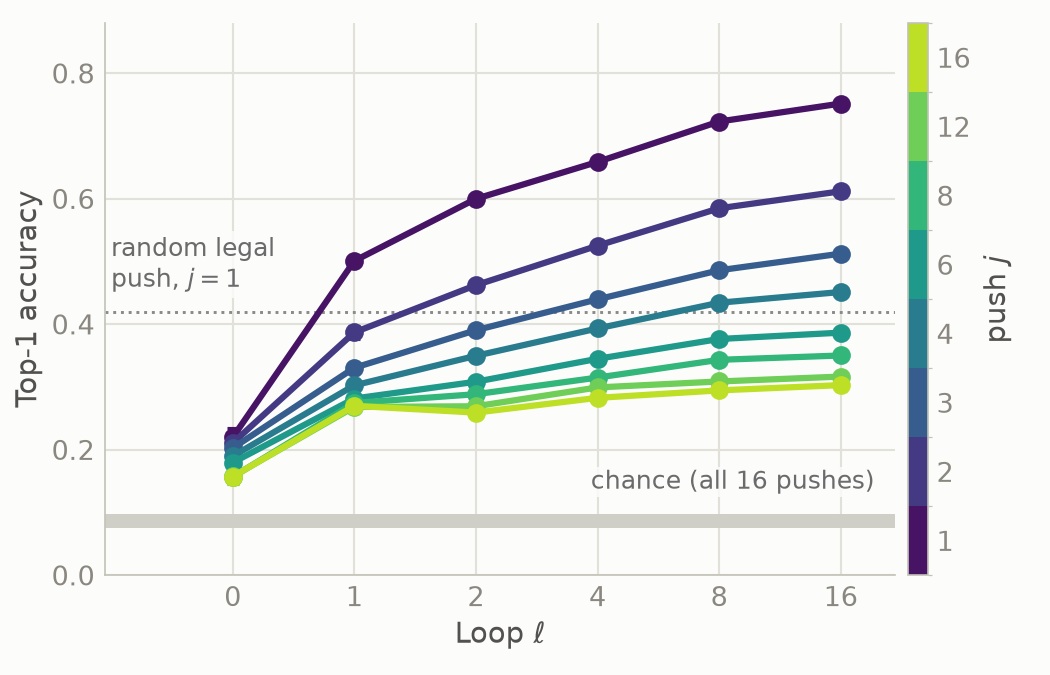}
    \caption{Accuracy in decoding the optimal plan from Looped Actor latents on Boxoban.}
    \label{fig:boxoban_plan_depth_lines}
\end{wrapfigure}
To test whether the model builds a multistep plan over looping, we probe the latent states for knowledge of the optimal plan. We take 50k held-out Boxoban levels (25k unfiltered validation, 25k medium validation) and find the move-optimal solutions using a shortest-path solver, recording the sequences of box pushes. For each future box move in $j \in \{1,2,3,4,6,8,12,16\}$ and loop $\ell \in \{1,2,4,8,16\}$, we train a small attentive probe that reads all $101$ tokens of the latent state $\mathbf{z}^{\ell}$ and predicts the $j$-th move as a (box, direction) pair among the $16$ candidates. The probe is counted correct if its top choice is the $j$-th move of an optimal solution. As a control we train a probe on the input embedding of the puzzle $\mathbf{x}$, i.e. without any looping. Levels are split $70/10/20$ into train, validation, and test, and we report mean and standard deviation test accuracy across the three Looped Actor checkpoints on Boxoban, each checkpoint itself averaged over three seeded probes. 

\fref{fig:boxoban_plan_depth_lines} shows the optimal plan decodability as latent looping increases. Plans are present from the first loop and are refined by further loops: at every move $j$ the decoded latent at $\ell=1$ already beats the no-looping control ($\ell=0$), and accuracy rises almost monotonically in $\ell$. Additionally, the benefit of looping is most prominent on nearby decisions. From $\ell=1$ to $\ell=16$, accuracy on the next push rises from $0.50$ to $0.75$, on the fourth push from $0.30$ to $0.45$, and on the sixteenth push only from $0.27$ to $0.30$. 
Accuracy on the sixteenth push is still well above both random chance ($0.0625$) and the input-embedding probe ($0.16$). 
We also compare to a random guess among the pushes that are non-colliding, which scores $0.42$ on the immediate push $j=1$. 
The input-embedding probe ($0.27$) falls short of this, the latent after one loop ($0.50$) performs better, and accuracy increases from there. Together, these results suggest that Looped Actor spends its iterative computation sharpening local decisions while building a coarse representation of a long-horizon plan.

\begin{figure}[ht]
    \centering
    \begin{minipage}[t]{0.47\textwidth}
        \centering
        \begin{minipage}[c][5cm][c]{\linewidth}
            \centering
            \includegraphics[
                width=\linewidth,
                height=5cm,
                keepaspectratio
            ]{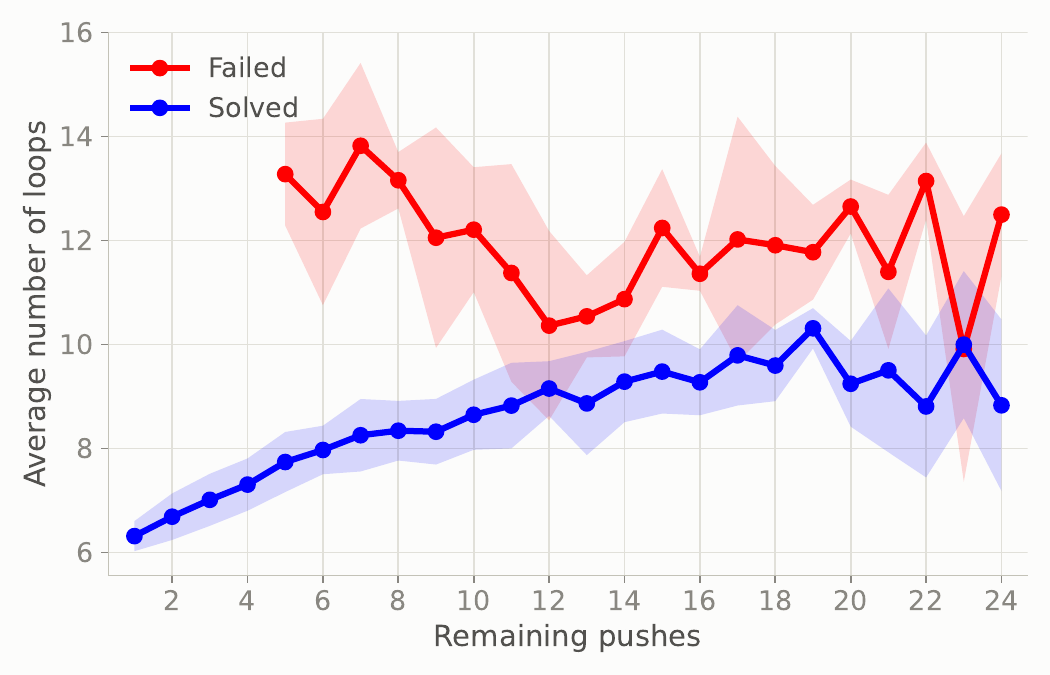}
        \end{minipage}
        \caption{Loop count for individual states plotted against the corresponding number of box pushes, for both failed and solved episodes.}
    \label{fig:boxoban_loops_vs_pushes}
    \end{minipage}
    \hfill
    \begin{minipage}[t]{0.47\textwidth}
        \centering
        \begin{minipage}[c][5cm][c]{\linewidth}
            \centering
            \includegraphics[
                width=\linewidth,
                height=5cm,
                keepaspectratio
            ]{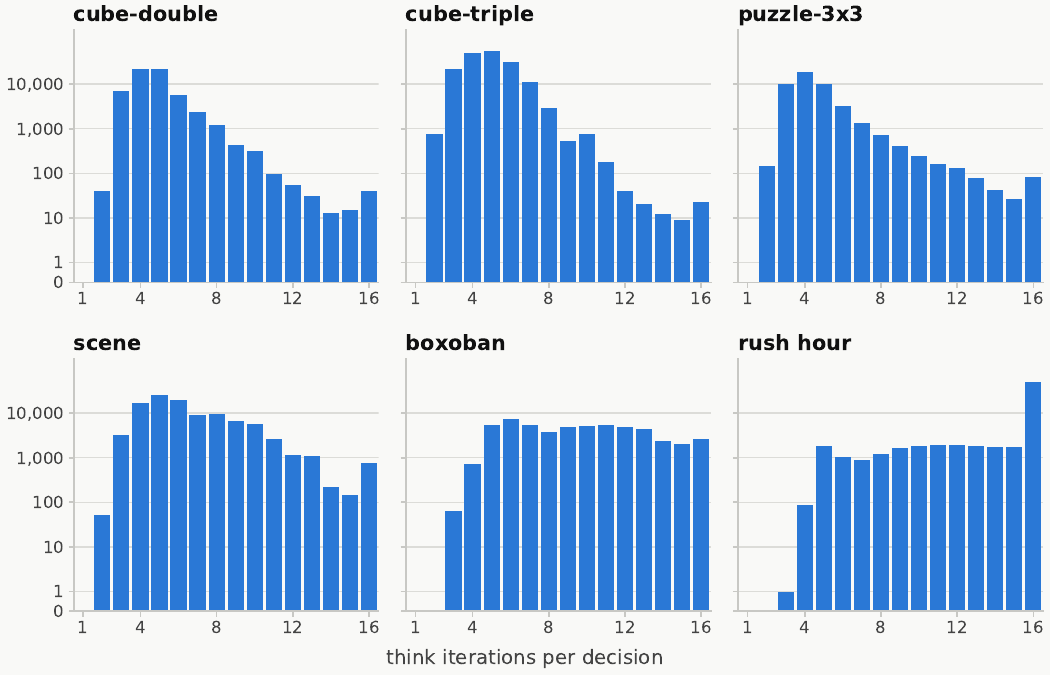}
        \end{minipage}
        \caption{Histograms of loop iterations of Looped Actor for all $6$ environments considered in this paper.}
        \label{fig:loop_hist_grid}
        
    \end{minipage}
\vspace{-1em}
\end{figure}

\paragraph{Learned compute allocation.}
Training the policy to stabilize its output distribution for each action associated with a state raises the natural question of whether the amount of compute spent to reach this stable point, i.e., the number of loops, correlates with the difficulty of the state in a meaningful way. To test this, we consider the three final Looped Actor checkpoints for Boxoban. We define the difficulty of a particular Boxoban state as the number of remaining box pushes needed to solve the game in minimal time. The test set contains states requiring between one and 24 remaining pushes.

\fref{fig:boxoban_loops_vs_pushes} plots the average number of loops of trained Looped Actor as a function of the number of remaining pushes, for both successful and failed episodes. We can see that when the agent is able to solve an episode, the average loop count increases almost perfectly monotonically between $1$ and $19$ remaining pushes, ranging from $6$ loops to more than $10$ loops. Notably, if the agent is unable to solve an episode, the loop count does not correlate meaningfully with the difficulty of a state. 

\begin{wrapfigure}{r}{0.45\textwidth}
    \centering
    \includegraphics[width=\linewidth]{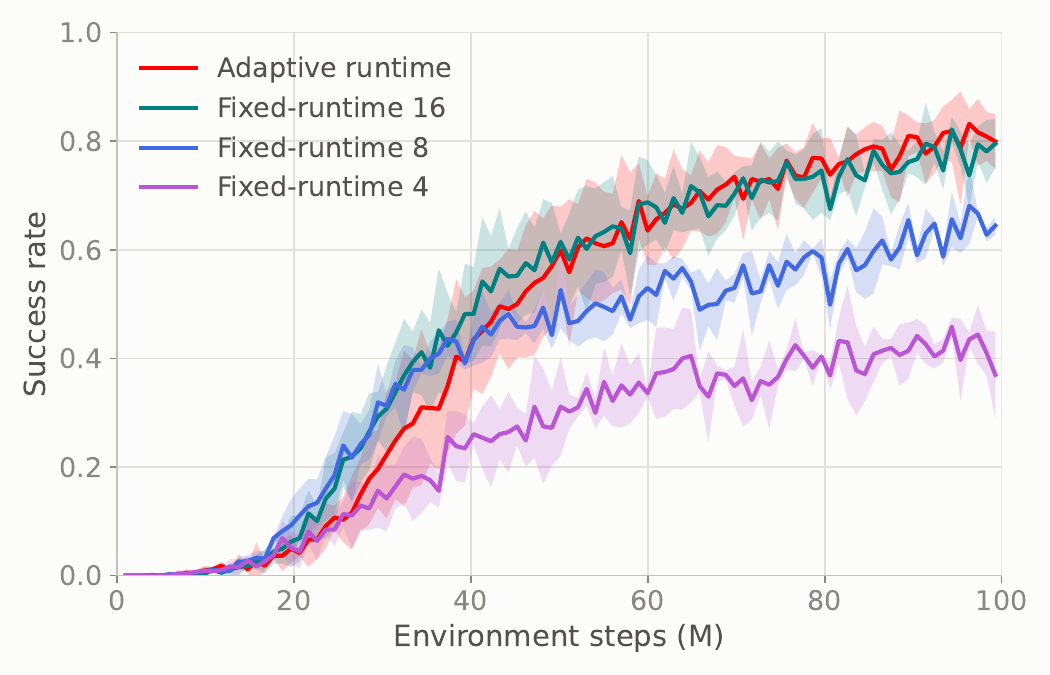}
    \caption{Adaptive vs Fixed-runtime success rates for Looped Actor on Boxoban.}
    \label{fig:boxoban_cap_ablation}
    \vspace{-1em}
\end{wrapfigure}
We further examine the distribution of per-state loop counts in the learned Looped Actor. \fref{fig:loop_hist_grid} shows these distributions across Boxoban and the five OGBench environments we consider. We can see that some states require substantially more loops. This is particularly pronounced in Boxoban and scene, which require the most planning and exhibits a long tail toward higher loop counts. Moreover, while Looped Actor is trained with a maximum loop count of $16$, \fref{fig:loop_hist_grid} shows that the effective average depth is much smaller. Thus, Looped Actor uses both fewer parameters and fewer inference FLOPs than Iso-FLOPs. 

To assess whether adaptive computation compromises performance, we compare our state-adaptive Looped Actor with fixed-runtime variants trained and evaluated using $4$, $8$, or $16$ loops. \fref{fig:boxoban_cap_ablation} shows success rates throughout training. Using fewer than $16$ loops substantially reduces performance, whereas the fixed-runtime variant with $16$ loops matches our state-adaptive Looped Actor. Yet more than $95\%$ of trajectories in the adaptive model halt before reaching $16$ loops, yielding a speedup of over $60\%$ relative to the fixed-runtime counterpart. These results demonstrate the computational benefits of adaptive computation: the policy devotes additional computation to states that require it, avoiding the cost of running all $16$ loops at every state.

\section{Conclusion}
We introduced Looped Actor, a depth-recurrent reasoning model policy that allocates computation adaptively across states by repeatedly applying a shared transformer block until its action distribution stabilizes. We provided a theoretical motivation for state-dependent computation, constructing a family of MDPs in which an adaptive policy achieves optimal return with asymptotically lower expected computation than any optimal fixed-runtime policy under the same description-size constraint. Empirically, Looped Actor matches or exceeds a larger untied transformer (with 16 times as many parameters) across 22 tasks spanning online and offline RL. While the largest improvements are on problems requiring multistep planning, we also show that Looped Actor retains competitive performance on high dimensional continuous control tasks. Through latent probes, we show that successive loops progressively refine representations of future optimal actions. 
In addition, we showed that the number of loops increases with planning difficulty in successful multistep planning episodes. 
Finally, we showed empirically that loop counts vary widely across states, highlighting the potential computational savings of adapting computation to each state rather than applying a large, fixed budget uniformly.
Together, these results suggest that Looped Actor offers a simple yet effective approach to equip
RL agents with adaptive computation and improve their planning capabilities.

Several limitations and open questions remain. Although our theoretical construction in Section~\ref{sec:adaptive_computation_theory} shows that compute-adaptive policies can be substantially more efficient than fixed-time policies, there is no guarantee that Looped Actor learns a compute-optimal policy in practice. Developing approaches that further improve the computation–reward Pareto frontier remains an important direction for future work. Our formulation also focuses on within-step computation without carrying internal reasoning across environment steps, leaving open how adaptive depth should interact with memory and partial observability.

\subsubsection*{Acknowledgments}
This work was supported by the Hector Foundation and the Department of the Air Force Artificial Intelligence Accelerator and was accomplished under Cooperative Agreement Number FA8750-19-2-1000. The views and conclusions contained in this document are those of the authors and should not be interpreted as representing the official policies, either expressed or implied, of the Department of the Air Force or the U.S. Government. The U.S. Government is authorized to reproduce and distribute reprints for Government purposes notwithstanding any copyright notation herein.

\bibliography{iclr2027_conference}
\bibliographystyle{iclr2027_conference}

\clearpage
\appendix
\begin{center}
{\bf Supplementary Material for:}\\
Looped Actor: Depth-Recurrent Reasoning Models for Reinforcement Learning
\end{center}

\section{Experimental Details}
All experiments were run on NVIDIA H100 and B200 GPUs. Both the environments and the training code were implemented in JAX \citep{jax2018github}.

\subsection{Visualization of Environments}
\label{app:env_images}

\begin{figure}[ht]
    \centering
    \begin{subfigure}[t]{0.3\textwidth}
        \centering
        \includegraphics[width=\linewidth]{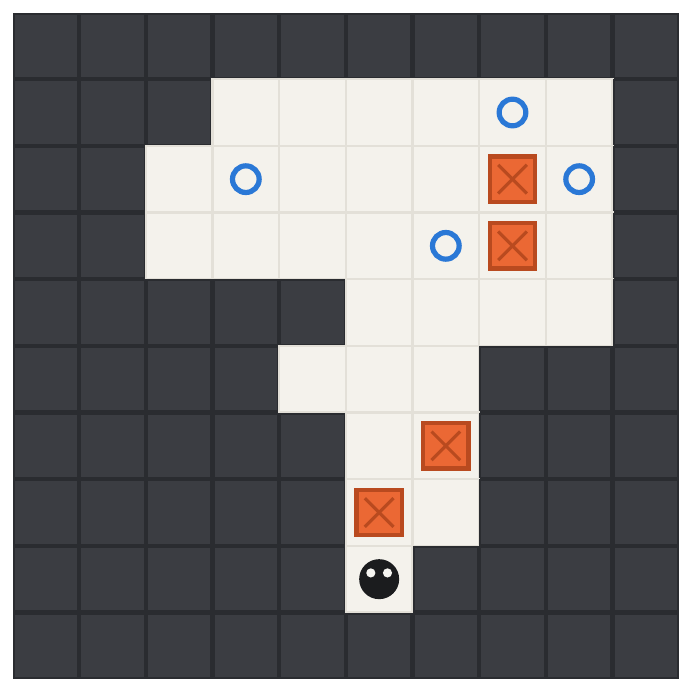}
        \caption{Example of a Boxoban game.}
        \label{fig:boxoban_examples}
    \end{subfigure}
    \hspace{1em}
    \begin{subfigure}[t]{0.32\textwidth}
        \centering
        \includegraphics[width=\linewidth]{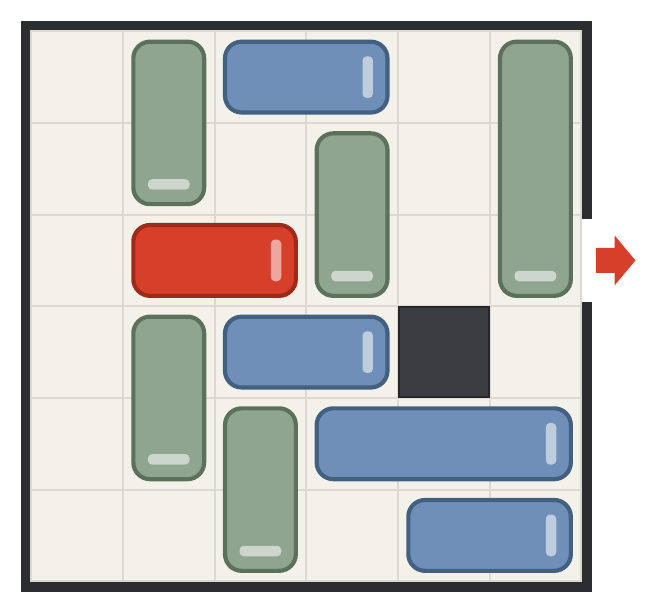}
        \caption{Example of a Rush Hour game.}
        \label{fig:rushhour_easy_test}
    \end{subfigure}
    \caption{Examples of the two game environments.}
    \label{fig:game_examples}
\end{figure}

\begin{figure}[ht]
\centering
\includegraphics[width=\textwidth]{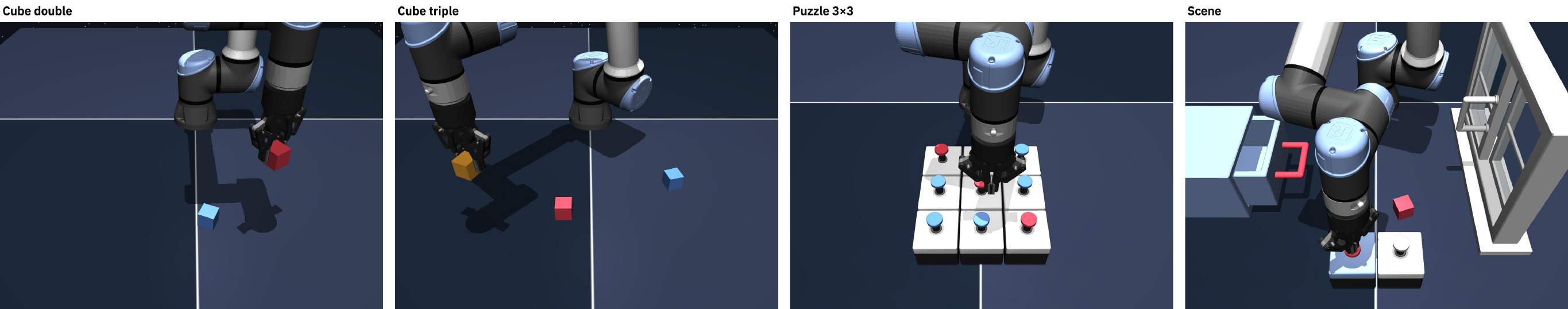}
\caption{OGBench environments for Cube double, Cube triple, Puzzle 3x3, and scene.}
\label{fig:ogbench_examples}
\end{figure}

\subsection{Hyperparameters}
\label{app:hps}
We chose the same fixed hyperparameters for all Transformer models considered in this paper. \tref{tab:hparams-model} provides the model hyperparameters, while \tref{tab:hparams-train} shows the training hyperparameters.

\begin{table}[ht]
\centering
\small
\begin{tabular}{@{}lccc@{}}
\toprule
Hyperparameter & OGBench & Boxoban & Rush Hour \\
\midrule
Layers per iteration & 2 & 2 & 2 \\
Model width & 128 & 128 & 128 \\
Attention heads & 4 & 4 & 4 \\
MLP expansion ratio & 4 & 4 & 4 \\
Grid convolution & No & Yes & Yes \\
Train iterations (min/max) & 2/16 & 2/16 & 2/16 \\
Halting threshold $\tau$ & $10^{-3}$ & $10^{-3}$ & $10^{-3}$ \\
\bottomrule
\end{tabular}
\caption{Model hyperparameters.}
\label{tab:hparams-model}
\end{table}

\begin{table}[ht]
\centering
\small
\begin{tabular}{@{}lcc@{}}
\toprule
Hyperparameter & OGBench & Boxoban / Rush Hour \\
\midrule
Algorithm & GCIQL & PPO \\
Optimizer & Adam & Adam \\
Learning rate & $3\times10^{-4}$ & $10^{-4}$ \\
Gradient clipping (global norm) & 1.0 & 1.0 \\
Batch / minibatch size & 1024 & 1024 \\
Training budget & $10^6$ gradient steps & $10^8$ environment steps \\
Discount $\gamma$ & 0.99 & 0.99 \\
\midrule
Parallel environments & -- & 1024 \\
Rollout steps per environment & -- & 64 \\
Epochs per rollout & -- & 4 \\
GAE $\lambda$ & -- & 0.95 \\
PPO clip $\epsilon$ & -- & 0.3 \\
Entropy coefficient & -- & 0.01 \\
Value loss coefficient & -- & 0.25 \\
IQL expectile & 0.9 & -- \\
Target EMA $\tau$ & 0.005 & -- \\
DDPG+BC $\alpha$ & 1.0 & -- \\
\bottomrule
\end{tabular}
\caption{Optimization hyperparameters. Boxoban and Rush Hour
share all settings listed here.}
\label{tab:hparams-train}
\end{table}

\section{Proofs}
\subsection{Proof of Theorem~\ref{thm:adaptive_computation}}
\label{app:adaptive_computation}

\paragraph{Proof idea.}
We construct a decision problem with frequent easy states and rare states on which the policy must solve a computationally demanding problem. The key lower bound comes from one state for each candidate policy: a state containing that policy's own program description. An optimal policy cannot finish within $n^2$ steps on this state. If its runtime is fixed, it must therefore spend more than $n^2$ steps on every state of the same length. An adaptive policy can instead stop quickly on the frequent easy states.

\paragraph{Computational conventions.}
Policy implementations are deterministic single-tape Turing machines that halt on every finite binary input. Fix an explicit, polynomial-time parsable binary encoding of finite transition tables, with state and tape-symbol identifiers written in binary. Let $\langle M\rangle$ denote the description of machine $M$, and let $M_\pi$ denote the machine implementing policy $\pi$. The size constraint $|\langle M_\pi\rangle|<K_{\max}$ counts the program description, excluding its input and working tape contents. Runtime counts all executed machine transitions.

\begin{proof}
\textbf{1. Define the correct action on the hard branch.}
We adapt the diagonalization in the proof of Theorem~A.1 of \citet{ghugare2026role}. Let $U$ be a universal simulator: given a program description and an input, it executes that program on that input.

For $n\ge2$ and $x\in\{0,1\}^n$, check whether $x$ has the form
\[
 x=1^\ell0p0^k,
 \qquad |p|=\ell,\qquad k=n-2\ell-1\ge0,
\]
where $p$ is a valid machine description. Here juxtaposition means concatenation, $1^\ell$ means $\ell$ consecutive ones, and $0^k$ means $k$ consecutive zeros. The prefix specifies the length of the program description, and the trailing zeros pad the string to length $n$.

For a valid encoding, use $U$ to execute the program described by $p$ on input $1x$, stopping after at most $n^2$ steps of the encoded program. Here $p$ specifies the instructions, while $1x$ is the complete state on which those instructions are executed. Define
\[
 h_n(x)=
 \begin{cases}
 1-a,&\text{if the program outputs }a\in\{0,1\}\text{ within }n^2\text{ steps},\\
 0,&\text{otherwise, including invalid encodings.}
 \end{cases}
\]
Thus $h_n$ always returns a binary answer. It deliberately disagrees with the encoded program whenever that program finishes within the cutoff. The cutoff counts steps of the encoded program; evaluating $h_n$ can take more actual steps because simulation has overhead.

Using padded configurations and a prescribed sequence of full tape sweeps, parsing and simulation can follow a schedule determined solely by $n$. All $n^2$ simulation rounds are executed, retaining the recorded result after simulated halting. This yields a polynomial-time implementation whose actual runtime is identical on all length-$n$ inputs.

\textbf{2. Construct two implementations of the same policy.}
Write the observed state as $s=bx$, where $b\in\{0,1\}$ and $x\in\{0,1\}^n$. The adaptive implementation $\pi_{\mathrm{ad}}$ scans the input and uses $b$ to select a branch: if $b=0$, it outputs $0$ immediately after the scan; if $b=1$, it computes and outputs $h_n(x)$. Consequently, for some constant $c>0$ and an integer-valued polynomial $q(n)\ge n^2$,
\begin{equation}
 C_{\pi_{\mathrm{ad}}}(0x)\le cn,
 \qquad
 C_{\pi_{\mathrm{ad}}}(1x)\le q(n)
 \quad\text{for every }x\in\{0,1\}^n,\ n\ge2.
 \label{eq:adaptive_branch_costs}
\end{equation}
The easy branch only scans the input while remembering $b$. The polynomial $q$ bounds all work on the hard branch, including parsing and simulation overhead; a sufficiently large integer multiple of $(n+1)^d$ suffices for a sufficiently large fixed integer $d$.

The fixed-runtime implementation $\pi_{\mathrm{fix}}$ computes $h_n(x)$ on every input, regardless of $b$, using the fixed schedule above. It then outputs $0$ if $b=0$, discarding the computed answer, and outputs $h_n(x)$ if $b=1$. Make this final selection take the same number of steps in either case. This program selects the same actions as $\pi_{\mathrm{ad}}$, but its runtime depends only on input length. For inputs shorter than three bits, both machines perform a fixed scan and output $0$.

Both programs have constant descriptions independent of $n$. Choose
\[
 K_0=1+\max\bigl\{
 |\langle M_{\pi_{\mathrm{ad}}}\rangle|,
 |\langle M_{\pi_{\mathrm{fix}}}\rangle|
 \bigr\}.
\]
Then both descriptions are shorter than every $K_{\max}\ge K_0$. This particular fixed-runtime implementation establishes existence; the lower bound below will apply to every optimal fixed-runtime implementation, regardless of its algorithm.

\textbf{3. Define the rewards and make the hard branch rare.}
The initial state space is $\mathcal S_n=\{0,1\}^{n+1}$. At state $bx$, action $0$ is rewarded when $b=0$, while action $h_n(x)$ is rewarded when $b=1$:
\[
 r_n(bx,a)=
 \begin{cases}
 \mathbf{1}\{a=0\},&b=0,\\
 \mathbf{1}\{a=h_n(x)\},&b=1.
 \end{cases}
\]
Here $\mathbf{1}\{E\}$ is one if $E$ holds and zero otherwise. After the action, the MDP transitions deterministically to a separate terminal state $\bot$, with no further action or reward. Thus $h_n(x)$ specifies the correct action, not the reward itself. Both constructed implementations always choose correctly and achieve optimal return $1$.

Sample $x$ uniformly from $\{0,1\}^n$ and independently choose $b=1$ with probability $1/q(n)$. Every initial state has positive probability:
\[
 \mu_n(1x)=\frac{2^{-n}}{q(n)},
 \qquad
 \mu_n(0x)=2^{-n}\left(1-\frac1{q(n)}\right).
\]
Combining these probabilities with ~\eqref{eq:adaptive_branch_costs} gives
\[
 \overline C_n(\pi_{\mathrm{ad}})
 \le \left(1-\frac1{q(n)}\right)cn
      +\frac1{q(n)}q(n)
 \le cn+1.
\]
The hard branch can be expensive, but its contribution to expected computation is at most one because it occurs with probability $1/q(n)$. The two programs, $q$, and the MDP family are all fixed independently of $K_{\max}$.

\textbf{4. Use one state to lower-bound the common runtime.}
Fix $K_{\max}\ge K_0$ and $n\ge2K_{\max}+1$. Consider any optimal fixed-runtime implementation $\pi\in\mathcal F_n$. It may use any algorithm; it need not evaluate $h_n$ by simulation. Let $p=\langle M_\pi\rangle$ be its own program description and $\ell=|p|<K_{\max}$. The choice of $n$ ensures that this description fits inside the string
\[
 x_\pi=1^\ell0p0^{\,n-2\ell-1}.
\]
The corresponding state $1x_\pi$ belongs to $\mathcal S_n$ and has positive probability.

Suppose, for a contradiction, that the common runtime satisfies $B_\pi(n+1)\le n^2$. On state $1x_\pi$, the policy then finishes within $n^2$ steps and outputs some action $a=\pi(1x_\pi)$. To determine the rewarding action at this state, the rule $h_n$ extracts the encoded program and runs it on $1x_\pi$. The encoded program is exactly $M_\pi$, and its input is exactly the state on which we are evaluating $\pi$. Therefore the simulation finishes within the cutoff with the same output $a$, giving
\[
 h_n(x_\pi)=1-a.
\]
The policy chooses $a$, while the rewarding action is $1-a$, so its reward on this state is zero. Because the state has positive probability,
\[
 J_n(\pi)\le1-\mu_n(1x_\pi)<1,
\]
contradicting optimality. Hence $B_\pi(n+1)>n^2$.

Only this one state was needed to establish the lower bound. Since $\pi$ has fixed runtime, the same bound applies on every other input of length $n+1$, including the easy states. Thus $\overline C_n(\pi)=B_\pi(n+1)>n^2$ for every $\pi\in\mathcal F_n$. The implementation $\pi_{\mathrm{fix}}$ makes this class nonempty. Combining the lower bound with the adaptive upper bound proves
\[
 \inf_{\pi\in\mathcal F_n}
 \frac{\overline C_n(\pi)}{\overline C_n(\pi_{\mathrm{ad}})}
 \ge\frac{n^2}{cn+1}.
\]
The reward rule is fixed throughout this argument: it always examines the program encoded in the observed state. Only the state $1x_\pi$ used to demonstrate failure depends on the candidate policy.
\end{proof}

\section{Additional Results}
\label{app:results}
\begin{figure}[h]
\centering
\includegraphics[width=\textwidth]{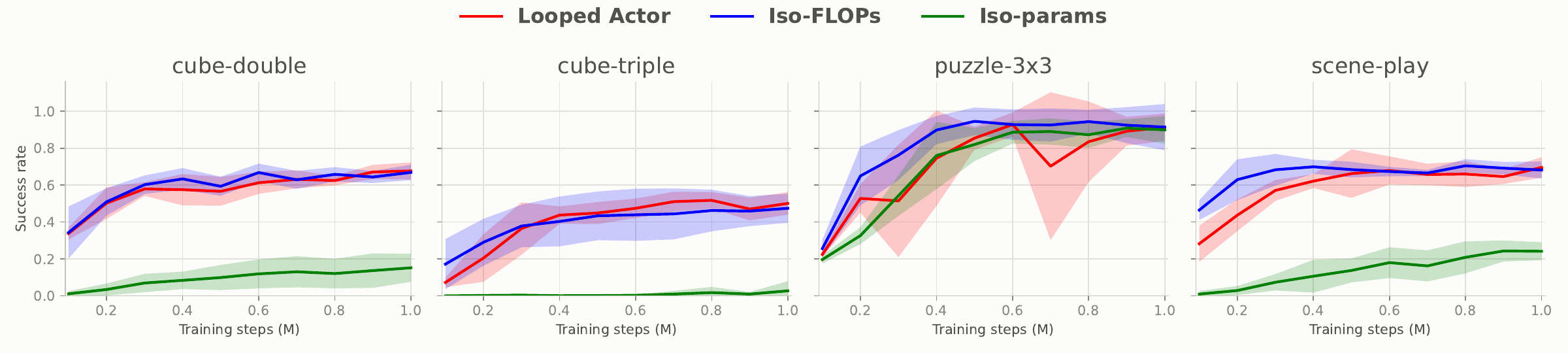}
\caption{Success rate of Looped Actor, Iso-FLOPs, and Iso-Parameters models on all four OGBench environments considered in this work during training (mean and standard deviation over 5 seeds).}
\label{fig:ogbench_learning_curves_nods_means}
\end{figure}

\end{document}

%% file: math_commands.tex
\usepackage{amsmath,amsfonts,bm}

\def\eqref#1{equation~\ref{#1}}
\def\1{\bm{1}}

\DeclareMathAlphabet{\mathsfit}{\encodingdefault}{\sfdefault}{m}{sl}
\SetMathAlphabet{\mathsfit}{bold}{\encodingdefault}{\sfdefault}{bx}{n}